\title{Cost-Sensitive Learning for Long-Tailed Temporal Action Segmentation}
\def\eg{\emph{e.g}\bmvaOneDot}
\newtheorem{proposition}{Proposition}
\newcommand{\cmark}{\ding{51}}%
\newcommand{\xmark}{\ding{55}}%
\newcommand{\ie}{\textit{i}.\textit{e}., }
\begin{document}

\maketitle

\begin{abstract}
 
Temporal action segmentation in untrimmed procedural videos aims to densely label frames into action classes. These videos inherently exhibit long-tailed distributions, where actions vary widely in frequency and duration. In temporal action segmentation approaches, we identified a bi-level learning bias. This bias encompasses (1) a class-level bias, stemming from class imbalance favoring head classes, and (2) a transition-level bias arising from variations in transitions, prioritizing commonly observed transitions.
As a remedy, we introduce a constrained optimization problem to alleviate both biases. We define learning states for action classes and their associated transitions and integrate them into the optimization process. We propose a novel cost-sensitive loss function  formulated as a weighted cross-entropy loss, with weights adaptively adjusted based on the learning state of actions and their transitions. Experiments on three challenging temporal segmentation benchmarks and various frameworks demonstrate the effectiveness of our approach, resulting in significant improvements in both per-class frame-wise and segment-wise performance. Code is availabel at \url{https://github.com/pangzhan27/CSL_LT-TAS}.

\end{abstract}

\section{Introduction}
\label{sec:intro}

Temporal action segmentation identifies actions in untrimmed procedural video sequences. These sequences often exhibit a long-tail distribution as shown in Fig.~\ref{fig:teaser}~(a) with tail actions that occur less frequently or have shorter durations. Despite this, state-of-the-art methods often overlook the long-tail, failing to recognize tail actions. For example, AsFormer~\citep{yi2021asformer} and DiffAct~\cite{liu2023diffusion} exhibit zero accuracy on 5 and 4 out of 48 actions on Breakfast (see Fig.~\ref{fig:teaser}~(a) and Supplementary).
The long-tail issue in action segmentation remains unexplored~\citep{ding2022temporal,farha2019ms,yi2021asformer,singhania2021coarse,gao2021global2local} due to the widespread use of global evaluation metrics across all samples which obscure the poor performance on tail actions.

Long tail learning on videos has predominantly been explored in action recognition~\cite{zhang2021videolt,perrett2023use}. Action recognition~\cite{sener2020temporal,lin2019tsm,fan2021multiscale} aims at classifying trimmed video clips as a whole, while temporal action segmentation focuses on frame-wise classification of untrimmed videos, necessitating the modeling of temporal dynamics and action transitions for precise segmentation. Conventional solutions to long-tail learning focus on reducing the class imbalance via loss re-weighting~\citep{cui2019class, lin2017focal}, logit adjustment~\citep{wang2021seesaw, menon2020long}, and post-hoc adjustment~\citep{kang2019decoupling, menon2020long}. These approaches operate under a class-independent assumption, overlooking temporal dependencies and dynamics in temporal action segmentation, thus leading to inaccurate segments and transitions. Consequently, striking a balance between improving segmentation accuracy and minimizing adverse impacts on learned temporal dynamics poses a significant challenge.
 
\begin{figure}
\begin{tabular}{c}
\includegraphics[width=12.7cm]{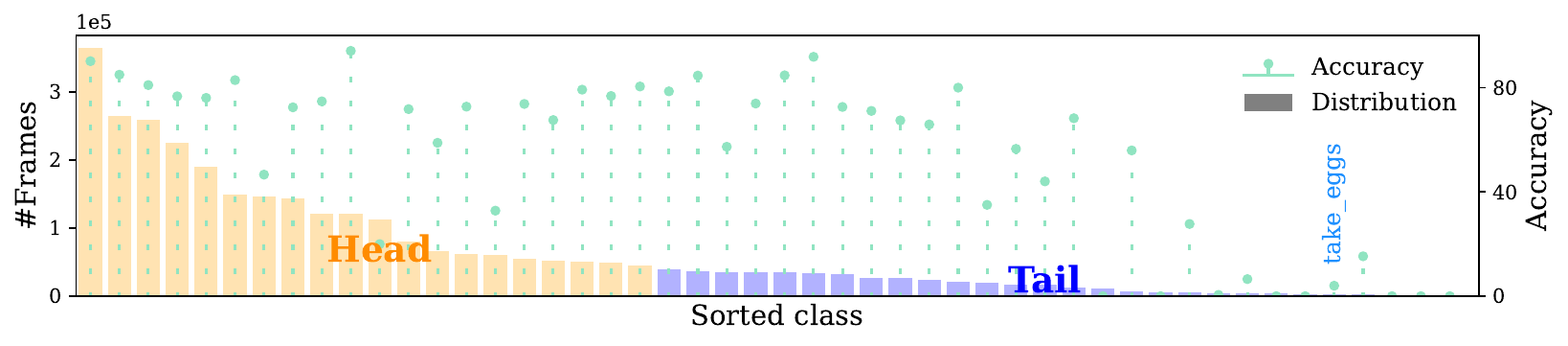}\\
(a) Data \& Class-wise accuracy distribution \\
\includegraphics[width=12.5cm]{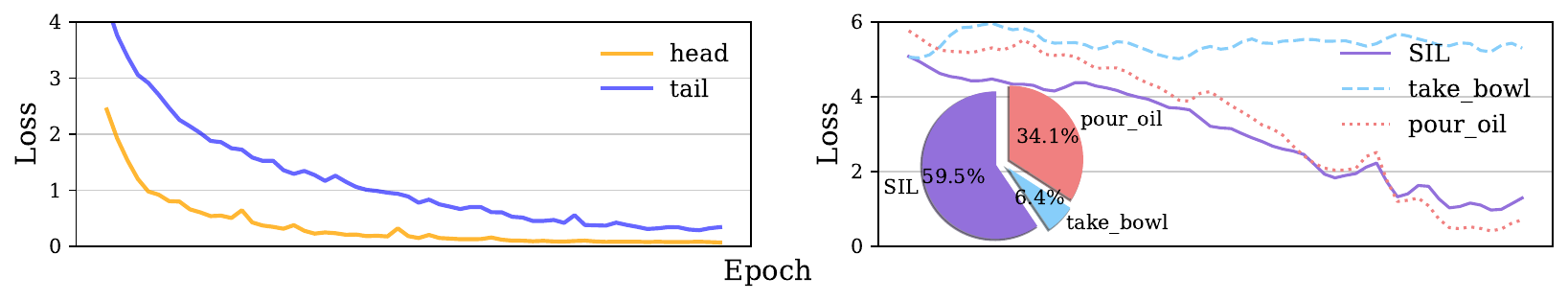} \\
(b) The bi-level learning bias
\end{tabular}
\caption{ 
(a) Long-tail action distribution on Breakfast~\citep{kuehne2014language}. The long-tail distribution results in low accuracy on tail actions with AsFormer~\citep{yi2021asformer}. (b) Left: Head-tail loss curve shows slow convergence rate on tail actions, demonstrating the class-level learning bias. Right: Action \emph{`take\_eggs'} from tail shows skewed transition distribution(pie chart), \ie different transitions from \{ \emph{`SIL'}, \emph{`pour\_oil'}, \emph{`take\_bowl'}\} to \emph{`take\_eggs'}, and transition learning bias(loss curve, where common transition from \emph{`pour\_oil'} are better learned than \emph{`take\_bowl'})}
\label{fig:teaser}
\end{figure}

Our paper addresses the long tail issue in temporal action segmentation, bridging the research gap of long-tailed learning for untrimmed videos. 
Empirically, we observe a bi-level biased learning process attributed to the long-tail problem. 

\begin{itemize}

\item Class imbalance leads to a \textbf{class-level learning bias}, which prioritizes learning head over tail actions, leading to different class convergence rates~(Fig.~\ref{fig:teaser}~(b) Left). However, unlike the typical over-fitting to tail observed in long-tailed image classification and segmentation~\citep{buda2018systematic,hsieh2021droploss,wang2021adaptive,samuel2021generalized,samuel2021distributional}, we observe under-learnt tail actions in temporal segmentation. This is because the learning of tail is suppressed due to the temporal continuity of frame representation, see~Fig.~\ref{fig:feature}. Distinctly separating two consecutive actions, one being head and the other tail, is challenging as they share similar frame representations, especially at segment boundaries. This similarity in representation hinders independent learning of tail actions without adversely affecting head actions.

\item Variations in action transitions introduce a \textbf{transition-level learning bias}. In Fig.~\ref{fig:teaser}~(b) Right, for action \emph{'take\_eggs'}, the transition distribution from  \emph{'pour\_oil'} or \emph{'take\_bowl'} to \emph{'take\_eggs'} is skewed. We observe a higher frequency of \emph{'take\_eggs'}  preceded by \emph{'pour\_oil'}. Such frequent transitions, \eg from \emph{'pour\_oil'}, tend to form stronger associations, resulting in learning gaps across transitions. For instance, \emph{'take\_eggs'} is more easily detected when preceded by \emph{'pour\_oil'} compared to \emph{'take\_bowl}. 
 
\end{itemize}
 
To address these biases, we propose utilizing the class-wise accuracy to evaluate action learning state and transition-wise accuracy for transition learning state. These evaluations determine if an action or its transition is over- or under-learned by comparing them to their respective average accuracy. We design a constrained optimization problem targeting a balanced accuracy to reduce class-level bias. Constraints on temporal transition learning are also imposed to address transition-level bias. Incorporating these constraints into a deep learning framework is nontrivial. To tackle this, we reframe the optimization as a Lagrangian min-max problem, which can be optimized by minimizing a surrogate cost-sensitive loss function. Our new loss function, a weighted cross-entropy formulation, adjusts weights adaptively based on the learning state of actions and their transitions.

Our contributions can be summarized as: (1) identifying the bi-level learning bias and the under-learned tail classes in temporal action segmentation, which differs from the common over-fitting trends observed in other tasks, (2) proposing a cost-sensitive loss that addresses these biases via a constraint optimization formulation, (3) conducting extensive evaluations on different backbones and datasets, showcasing notable performance improvements. 

\section{Related works}
\label{sec:related}

\textbf{Temporal Action Segmentation} employs various architectures such as temporal convolutional networks (TCN)~\citep{lea2017temporal,li2020ms,lei2018temporal,singhania2021coarse,farha2019ms}, transformers~\citep{yi2021asformer,behrmann2022unified}, and diffusion models~\citep{liu2023diffusion}. These architectures expand the temporal receptive field~\cite{singhania2021coarse,farha2019ms} and aggregate temporal dynamics~\cite{yi2021asformer,behrmann2022unified}, facilitating information exchange across frames. To address the over-segmentation in such backbones, several approaches like boundary smoothing~\citep{ishikawa2021alleviating, wang2020boundary} and refinement~\citep{behrmann2022unified} has been proposed. Moreover, to incorporate temporal constraints in these backbones, differentiable temporal logic~\citep{xu2022don} and activity grammar~\citep{gong2024activity} are utilized.
 
\noindent \textbf{Long-Tail Learning} involves various techniques. Re-sampling methods either undersample the head~\citep{buda2018systematic,shen2016relay,byrd2019effect} or oversample the tail~\citep{he2009learning,drummond2003c4}. Re-weighting assigns different weights to classes~\citep{wang2017learning,huang2016learning,cui2019class} or samples~\citep{ren2018learning,lin2017focal}. Logit adjustment modifies margins based on class priors~\citep{menon2020long,cao2019learning} or compensation terms~\citep{tan2020equalization,wang2021seesaw,zhao2022adaptive}. Post-hoc adjustment includes normalizing classification weights~\citep{kang2019decoupling,zhang2019balance,kim2020adjusting} or modifying thresholds~\citep{collell2016reviving,king2001logistic}. These methods have been extended to object detection/segmentation~\citep{li2020overcoming,tan2020equalization} and video classification~\cite{zhang2021videolt,perrett2023use}. 

Temporal action segmentation differs from these tasks due to temporal correlations between frames and segments. The long-tail issue in this domain remains unexplored. Our work addresses the long-tail in temporal action segmentation, aiming to tackle learning biases while accounting for temporal dynamics.

\section{Method}
\label{sec:method}
In temporal action segmentation, a classifier $f$ maps a video sequence $X \in \mathbb{R}^{D \times T}$ represented with pre-computed features~\cite{carreira2017} to a sequence of actions $Y \in [L]^{T}$. Here, $D$ is the feature dimension, $T$ indicates the number of frames, and $L$ represents the number of classes. Classifier $f$ is typically a neural network backbone such as MSTCN~\cite{farha2019ms} or AsFormer~\cite{yi2021asformer}, where segmentation is usually framed as frame-wise classification.

Our paper presents a cost-sensitive learning framework to tackle the long-tail issue in action segmentation. We evaluate the learning states of both actions (class-level) and action transitions (transition-level) using a transition-based confusion tensor in Section \ref{sec:method1}. We then formulate a learning-aware constrained optimization problem that is transformed it into a new cost-sensitive loss~\citep{narasimhan2021training,rangwani2022cost,he2022relieving} in Section \ref{sec:method2}. We provide details on training with the cost-sensitive loss and a new post-processing technique for inference in Section \ref{sec:method3}.

\begin{figure*}[tb]
\centering
\begin{minipage}[b]{.49\textwidth}
 \centering
 \includegraphics[scale=0.24]{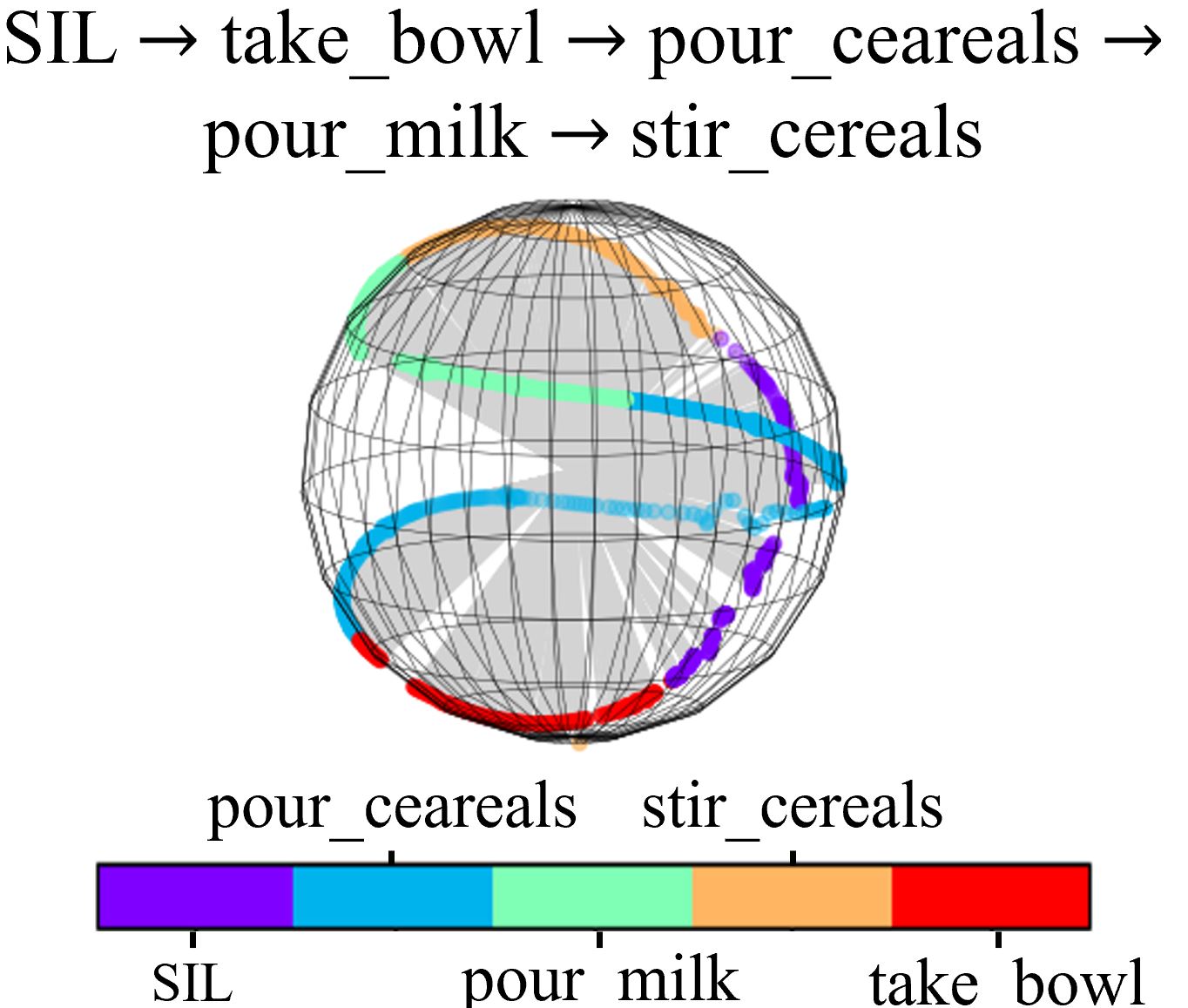}
 \caption{The t-SNE of the frame-wise representations for a video of making cereal exhibits a strong temporal continuity.}
 \label{fig:feature}
\end{minipage}
\
\begin{minipage}[b]{.49\textwidth}
 \centering
 \includegraphics[scale=0.33]{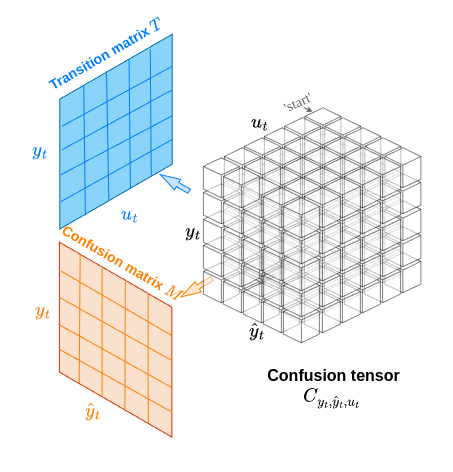}
 \caption{Transition-based confusion tensor.}
 \label{fig:tensor}
\end{minipage}%
\end{figure*}

\subsection{Class- \& Transition-level Learning States}
\label{sec:method1}
A video sequence with $N$ actions and $T$ frames can be labelled either on a frame-wise basis, $Y = \{y_t\}_{t=1}^T$, with frame index $t$, or on a segment-wise basis, $Y = \{a_n\}_{n=1}^N$, with segment index $n$. The action segment $a_n = (s_n, e_n, l_n)$ represents a segment with start time $s_n$, end time $e_n$, and label $l_n$. For a timestamp $t \in [s_n, e_n]$, its frame label $y_t = l_n$.

To reduce the class-level learning bias, it is important to guarantee all classes are equally learned, namely achieving a uniform learning across classes. Similarly, all transitions should be learned equally to reduce transition-level learning bias. We assess the learning state for an action or a transition during training using its corresponding accuracy on training set. This can be calculated with a transition-based confusion tensor. Given a classifier $f$, the ${ijk}^{\text{th}}$ entry of its confusion tensor $C$ is defined as
\begin{equation}
 C_{i,j,k}[f] = \mathbb{E}_{(X,Y)}[\mathbb{1}(y_t=i, \hat{y}_t=j, u_t =k)] 
\end{equation}
where for the $t^{\text{th}}$ frame, $y_t$ is the ground truth, $\hat{y}_t$ is the prediction from $f$, $u_t$ represents the previous action of the current frame, namely, $\forall t \in [s_n, e_n], y_t = l_n, u_t = l_{n-1}$, with $l_{-1} = \text{'start'}$. 

Based on the confusion tensor $C$, we can derive the corresponding confusion matrix $M$ and transition matrix $T$ shown in Fig.~\ref{fig:tensor} as 
\begin{equation}
 M_{i,j}[f] = \sum_k C_{i,j,k}[f], \quad
 T_{i,k} = \sum_j C_{i,j,k}[f]
\end{equation}
Note the transition matrix $T$ remains constant regardless of the classifier $f$. It can be derived solely from the training dataset and remains consistent across variants of models or initialization. For simplicity, we omit the dependency of $T$ on $f$.

Then, for an action class $i$ with a transition from another class $k$, $k \neq i$, the learning states for class $i$ and the transition associated $i$ and $k$ are formulated as the corresponding accuracy: 
\begin{align}
 Acc_i[f] = \frac{M_{i,i}[f]}{\pi_i} = \frac{\sum_k C_{i,i, k}[f]}{\pi_i}, \quad
 Tacc_{k\rightarrow i}[f] = \frac{C_{i,i,k}[f]}{T_{i,k}},
\label{eq:acc}
\end{align}
where $\pi_i$ is the class prior $p(y=i) = \sum_k T_{i,k}$.

% --------------------------------------------------------------------------
\subsection{Cost-sensitive Learning with Constraint Optimization}
\label{sec:method2}
We propose a new learning objective to mitigate class-level learning bias. The objective is to maximize per-class accuracy $\max\limits_{f} \sum_i Acc_i[f]$, ensuring equal attention to all classes. 

To further reduce the biased transition learning, we define transition constraints to reduce the learning variance. Let $V_T$ denote the set of valid action transitions observed in the training set; specifically, $V_T = \{(k \rightarrow i), T_{i,k} > 0 \}$. One way to regularize the transition learning is to penalize under-learned transitions:
\begin{equation}
\forall (k \rightarrow i) \in V_T, \ \ Tacc_{k \rightarrow i}[f] \ge \epsilon \overline{Tacc}, \quad \text{where} \quad \overline{Tacc} = \frac{1}{|V_T|} \sum_{(k \rightarrow i) \in V_T} Tacc_{k \rightarrow i}[f],
\end{equation}
where $\epsilon$ is a tolerance hyperparameter which set to 0.9 in our implementation. $\overline{Tacc}$ is the average accuracy over all transitions. To simplify the problem, we detach $\overline{Tacc}$ from the classifier $f$, \ie $\overline{Tacc}$ is not considered as a function of $f$. We set it as a parameter and update it every epoch during training.

Then, the objective on per-class accuracy and above constraints combine to the problem: 
\begin{equation}
 \max\limits_{f} \sum_{i,k} \frac{C_{i,i,k}[f]}{\pi_i} \quad \quad \text{s.t.} \ \forall (k \rightarrow i) \in V_T, \ \ \frac{C_{i,i,k}[f]}{T_{i,k}} \ge \epsilon \overline{Tacc}.
 \label{eq: op0}
\end{equation}

Optimizing Eq. (\ref{eq: op0}) with constraints is challenging. A common strategy is to relax the constraints and reformulate the objective as a Lagrangian. 
Eq.~(\ref{eq: op0}) can be reformulated as an equivalent Lagrangian min-max problem $\mathcal{L}(f, \lambda)$ by introducing Lagrange multipliers $\lambda$:
\begin{equation}
 \max\limits_{f} \min\limits_{\lambda \in \mathbb{R}_+} \sum_{i, k} \frac{C_{i,i,k}[f]}{\pi_i} + \sum_{i, k} \mathbb{1}(T_{i,k} > 0) \lambda_{i,k} \left( \frac{C_{i,i,k}[f]}{T_{i,k}} - \epsilon \overline{Tacc} \right) \frac{T_{i,k}}{\pi_i},
\label{eq: op}
\end{equation}
\noindent where a constant term $\frac{T_{i,k}}{\pi_i}$ for a given transition pair $k \rightarrow i$ is multiplied by each transition constraint to balance the magnitudes between the objective and constraints. 

The Lagrangian is solved iteratively by maximizing $f$ while fixing the multipliers $\lambda$ and minimizing $\lambda$ while keeping $f$ fixed. In practice, instead of the full optimization at each max$\slash$min iteration, we only take a few update steps for gradient descent to update the classifier $f$ and for projected gradient descent to update the Lagrange multiplier. The detailed training algorithm can be in Algorithm~\ref{alg:1}. There are theoretical guarantees on convergence for the learned classifier~\citep{chen2017robust, cotter2019optimization, narasimhan2021training}. 

\begin{algorithm}
	\caption{Optimizing Per class  Accuracy with Transitional Constraints}
	\label{algo:0}
    \hspace*{\algorithmicindent} \textbf{Input:} \ Training set $\mathcal{D}$, Class prior $\pi \in \mathbb{R}_+^{L}$ and transition prior $T \in \mathbb{R}_+^{L \times (L+1)}$ derived from $\mathcal{D}$, Learning rate for multiplier $\gamma \in \mathbb{R}_+$, Cost-sensitive loss function $l$, Lagrangian objective $\mathcal{L}$ \\
    \hspace*{\algorithmicindent} \textbf{Initialize:} Classifier $f$, Multiplier $\lambda \in \mathbb{R}_+^{L \times (L+1)}$
	\begin{algorithmic}[1]
        \For{ epoch $ \ l \leftarrow 0, \dots, N $} 
            \State \textcolor{magenta}{\emph{// Update $G$}}
            \State Calculate the gain tensor $G$ based on $\pi$, $T$, and $\lambda$
            \State \textcolor{magenta}{\emph{// Update $f$}}
		\State $ f^{l+1} \in \arg\min_{f} \frac{1}{|\mathcal{D}|} \sum_{(X,Y) \in \mathcal{D}} l(y_t, \hat{y}_t, G) $  \emph{ // $Y = \{y_t\}$}
		\State \textcolor{magenta}{\emph{// Update $\lambda$}}
        \State $C_{i,j,k}[f^{l+1}] = \frac{1}{|\mathcal{D}|} \sum_{(X,Y)\in \mathcal{D}} \mathbb{1}(y_t=i, \hat{y}_t=j, u_t=k)$ \emph{ // calculate confusion matrix}
        \State Calculate $\overline{Tacc}$ based on $T$ and $C[f^{l+1}]$
		\State $\lambda_{i,k}^{l+1} = \max \{\lambda_{i,k}^{l} - \gamma \nabla_{\lambda_{i,k}}\mathcal{L}, 0\}$ {\emph{// gradients are calculated based on $\overline{Tacc}$}}
        \EndFor
	\end{algorithmic}  
 \label{alg:1}
\end{algorithm} 

\noindent \textbf{Step 1. Maximizing the Lagrangian $\mathcal{L}(f)$ with fixed $\lambda$ } leads to the following objective:
\begin{equation}
 \max\limits_{f} \sum_{i, k} G_{i,i,k} C_{i,i,k}[f] + \text{constant},
\label{eq:max}
\end{equation}
\noindent where $G$ is a \emph{gain tensor} representing the gain of the correct classification and transition, while `constant' absorbs terms not depending on $f$. Given a transition from action $k$, the slice $G_{:,:,k}$ from the gain tensor $G$ is a diagonal matrix, and $G_{i,i,k} = (1 + \mathbb{1}(T_{i,k} > 0) \lambda_{i,k}) / \pi_i $. 
Optimizing Eq. (\ref{eq:max}) is equivalent to minimizing a re-weighted loss in Eq. (\ref{eq:rw}), which is proven to be calibrated for this diagonal gain matrix~\citep{patrini2017making, narasimhan2021training}. See Supplementary for the proof. 
\begin{equation}
l_{CE}(y_t, u_t, X) = -G_{y_t, y_t, u_t} \log(p(y_t \mid X))
\label{eq:rw}
\end{equation}

The formulation in Eq.~(\ref{eq:max}) represents a more generalized form of cost-sensitive learning~\citep{narasimhan2021training}.
Standard cost-sensitive learning is typically formulated as naive reweighting based on the class frequency, treating each class independently. In contrast, our proposed reweighting factor $G_{i,i,k}$ considers class inter-dependencies, incorporating an extra term that models the transitions as shown in Eq. (\ref{eq: core}). This extra term allows adaptive adjustment of the reweighting factor for a given action based on its current transition learning state. 
\begin{equation}
 G_{i,i,k} = {\underbrace{ \frac{1}{\pi_i}}_{\text{action prior}}} + {\underbrace{ \frac{\lambda_{i,k}\mathbb{1}(T_{i,k} > 0)}{\pi_i}}_{\text{transition learning state}}} 
 \label{eq: core}
\end{equation}

\noindent \textbf{Step 2. Minimizing the Lagrangian $\mathcal{L}(\lambda)$} is done by projected gradient descent. The gradients of the Lagrangian objective $\mathcal{L}$ in Eq. (\ref{eq: op}) with respect to $\lambda$ 
is estimated as $\nabla_{\lambda}\mathcal{L}$. The multipliers $\lambda$ are updated with gradient descent and projected back to $\mathbb{R}_+$ as
\begin{equation}
 \lambda^{(l+1)} = \max\{0, \lambda^{(l)} - \gamma \nabla_{\lambda}\mathcal{L} \},
\end{equation}
where $\gamma$ is the step size for updating the multipliers, $l$ is the iteration index.

\subsection{Training and Inference}
\label{sec:method3}
Empirically, over-emphasizing the per-class performance will hurt the global performance. To achieve a better trade-off, we introduce a hyper-parameter, $\tau$, to modify the gain tenser as $\tilde{G}_{i,j,k} = G_{i,j,k}^{\tau}$ for the frame-wise loss in Eq. (\ref{eq:rw}) as 
\begin{equation}
l_{CE}(y_t, u_t, X) = -G_{y_t, y_t, u_t}^{\tau} \log(p(y_t \mid X))
\label{eq:rw1}
\end{equation}
A small $\tau$ will smooth the weights of the loss function, favoring global performance. Conversely, a large $\tau$ emphasizes reducing biased learning and enhancing per-class performance. We estimate the confusion tensor $C$ every epoch on training set to update Lagrangian multipliers $\lambda$ and the gain tensor $G$. Importantly, this modification only affects training, leaving the inference stage unchanged.

Similar to \cite{kang2019decoupling}, we identify that the final classifier is biased. Inspired by Nearest Class Mean(NCM)~\cite{snell2017prototypical}, we propose a new post-processing technique to further mitigate the long-tail impact.
Specifically, instead of relying on the classifier, we make predictions using frame representations based on NCM, which involves computing mean representations for each class and performing nearest neighbor search using Euclidean distance. Applying frame-level NCM, however, disregard the temporal continuity and lead to over-segmentation. We propose Segment Nearest Class Mean (S-NCM) to address this. As in Eq.~(\ref{eq:infer}), we first leverage the classifier's predictions $\hat{y}$ to detect segment boundaries $b$ and then utilize frame-wise NCM predictions $\hat{v}$ for labelling each segment through major voting, namely the frames in each segment share the same prediction $\tilde{y}$. The segment label is determined by selecting the most frequent prediction (mode) from all frame-wise NCM predictions $\hat{v}$.
\begin{equation}
 \tilde{y}_{b_i : b_{i+1}} = \text{mod}(\hat{v}_{b_i}, \hat{v}_{b_i +1}, \cdots, \hat{v}_{b_{i+1}})
 \quad \text{where} \quad
 b = \{t, \ \ \text{if} \ \ \hat{y}_t != \hat{y}_{t+1}, \ \forall t \in [1, T-1]\}
 \label{eq:infer}
\end{equation}

\section{Experiments}
\label{sec:experiments}
%-------------------------------------------------------
\subsection{Dataset, Implementation, and Evaluation}
\textbf{Dataset. }
We evaluate our framework on three benchmarks: Breakfast Actions~\citep{kuehne2014language}, 50Salads~\cite{stein2013combining} and the recently released Assembly101~\citep{sener2022}. Breakfast comprises 1712 videos for breakfast preparation, featuring 48 action classes with an average duration of 2.3 minutes. Assembly101 has a collection of 4321 videos focused on assembling and disassembling toys, with an average length of 7.1 minutes and 202 coarse action classes.  50Salads contains 50 videos of making mixed salads, involving 19 actions.We split the actions in these datasets into Head and Tail groups based on the class frequency as in Table \ref{tab:cls_split} and evaluate the performance of different methods on each group. 

\begin{table}[htb]
  \caption{ Head$\slash$Tail class split criterion.}
  \label{tab:data}
  \centering
  \begin{tabular}{c|c|c|c|c}
    \hline
    \multirow{2}{*}{\textbf{Dataset}}  & \multicolumn{2}{c|}{\textbf{Head}} & \multicolumn{2}{c}{\textbf{Tail}} \\
    \cline{2-5} & $\#$classes & $\#$frames  & $\#$classes & $\#$frames \\
    \hline
    Breakfast & 20 & $\ge$ 5$\times 10^4$ & 28 & $\le$ 5$\times 10^4$ \\
    50salads & 6 & $\ge$ 4$\times 10^4$ & 13 & $\le$ 4$\times 10^4$ \\
    Assembly101 & 31 & $\ge$ 1.8 $\times 10^5$ &  171 & $\le$ 1.8 $\times 10^5$  \\ 
    \hline
  \end{tabular}
  \label{tab:cls_split}
\end{table}

\noindent \textbf{Implementation details.} 
We consider three backbones: a temporal convolution model MSTCN~\citep{farha2019ms}, a transformer model ASFormer~\citep{yi2021asformer}, and a state of the art diffusion-based model DiffAct~\citep{liu2023diffusion}. All models are retrained using the released source codes based on I3D features~\citep{carreira2017} pre-trained on Kinetics. Results on Breakfast and 50salads are reported based on standard 4- and 5-fold splits respectively, while for Assembly101, we employ the provided train-val-test split and report test results. All long-tailed methods are trained with the same settings as the original baseline. 

\noindent \textbf{Evaluation metrics.} Three commonly used metrics~\citep{farha2019ms,singhania2021coarse,yi2021asformer,wang2020boundary} are: frame-wise accuracy (Acc.), segment-wise edit score (Edit), and F1 score with IoU thresholds of 0.10, 0.25 and 0.50 (F1@10/25/50). Conventionally, these metrics are tabulated globally over all the frames, obscuring the performance of tail actions.
To emphasize the performance of tail actions, we use balanced metrics commonly used in long-tailed works\citep{wang2021seesaw,tan2020equalization,kang2019decoupling}.
Specifically, we calculate the average of recall scores per class for frame-wise accuracy and use the per-class F1 score for the segment-wise evaluation. Under the long-tailed learning setting, we primarily report per-class performance in the main results. Please refer to the Supplementary for global metrics like Edit score, global accuracy, and global F1.

\subsection{Benchmark Results}
\begin{table}[t]
\caption{Per-class \& global result summary across datasets and backbones over 3 runs. The column 'G\_F1' represents the global F1 score with IOU threshold 0.25 over all samples.}
\centering
\resizebox{1.01\columnwidth}{!}{
\begin{tabular}{c|ccc|c|c|ccc|c|c|ccc|c|c}
\hline
\multirow{2}{*}{\centering{\textbf{Model}}} & \multicolumn{5}{c|}{\textbf{Breakfast}} & \multicolumn{5}{c|}{\textbf{50salads}} & \multicolumn{5}{c}{\textbf{Assembly101}} \\ 
\cline{2-16} & \multicolumn{4}{c|}{\textbf{Per class}} & \multirow{2}{*}{\textcolor{gray}{\textbf{G\_F1}}} & \multicolumn{4}{c|}{\textbf{Per class}} & \multirow{2}{*}{\textcolor{gray}{\textbf{G\_F1}}} & \multicolumn{4}{c|}{\textbf{Per class}} & \multirow{2}{*}{\textcolor{gray}{\textbf{G\_F1}}} \\
\cline{2-5} \cline{7-10} \cline{12-15} & \multicolumn{3}{c|}{F1@\{10,25,50\}} & Acc. & & \multicolumn{3}{c|}{F1@\{10,25,50\}} & Acc. & & \multicolumn{3}{c|}{F1@\{10,25,50\}} & Acc. & \\ 
\hline
\textbf{MSTCN}~\citep{farha2019ms} & 48.1 & 44.8 & 36.9 & 49.1 & \textcolor{gray}{57.9} & 78.8 & 76.4 & 67.6 & 75.6 & \textcolor{gray}{75.9} & 7.5 & 6.6 & 4.8 & 8.3 & \textcolor{gray}{27.2}\\
+ CB~\cite{cui2019class} & \textcolor{blue}{+0.9} & \textcolor{blue}{+0.7} & \textcolor{blue}{+0.3} & \textcolor{blue}{+0.6} & \textcolor{gray}{0.0} & \textcolor{red}{-0.6} & \textcolor{red}{-0.2} & \textcolor{red}{-0.8} & \textcolor{red}{-0.3} & \textcolor{gray}{-0.4} & \textcolor{blue}{+1.8} & \textcolor{blue}{+1.7} & \textcolor{blue}{+1.2} & \textcolor{blue}{+1.5} & \textcolor{gray}{-0.5}\\
+ LA~\cite{menon2020long} & \textcolor{blue}{+1.0} & \textcolor{blue}{+1.1} & \textcolor{blue}{+0.1} & \textcolor{blue}{+1.4} & \textcolor{gray}{0.0} & \textcolor{red}{-0.2} & \textcolor{red}{-0.7} & 0.0 & \textcolor{red}{-0.3} & \textcolor{gray}{-0.7} & \textcolor{blue}{+2.1} & \textcolor{blue}{+1.4} & \textcolor{blue}{+1.2} & \textcolor{blue}{+1.2} & \textcolor{gray}{-1.1}\\
+ Focal~\cite{lin2017focal} & \textcolor{blue}{+0.2} & \textcolor{red}{-0.3} & \textcolor{red}{-1.2} & \textcolor{red}{-0.5} & \textcolor{gray}{-0.4} & \textcolor{blue}{+0.6} & \textcolor{blue}{+0.5} & \textcolor{blue}{+1.0} & \textcolor{blue}{+0.4} & \textcolor{gray}{+0.2} & \textcolor{blue}{+1.9} & \textcolor{blue}{+1.6} & \textcolor{blue}{+0.5} & \textcolor{blue}{+1.4} & \textcolor{gray}{-0.2}\\
+ $\tau$-norm~\cite{kang2019decoupling} & \textcolor{red}{-1.1} & \textcolor{red}{-1.0} & \textcolor{red}{-1.0} & \textcolor{red}{-0.8} & \textcolor{gray}{-0.9} & \textcolor{red}{-0.6} & \textcolor{red}{-0.5} & \textcolor{red}{-0.2} & \textcolor{blue}{+0.2} & \textcolor{gray}{-0.6} & \textcolor{blue}{+0.1} & \textcolor{blue}{+0.2} & \textcolor{blue}{+0.1} & \textcolor{red}{-0.2} & \textcolor{gray}{+0.2}\\
%+ ours & 50.9 & 47.1 & 37.5 & 52.0 & 79.7 & 77.7 & 69,3 & 77.1 \\
+ ours(S-NCM) & \textcolor{blue}{\textbf{+8.1}} & \textcolor{blue}{\textbf{+8.1}} & \textcolor{blue}{\textbf{+5.7}} & \textcolor{blue}{\textbf{+3.7}} & \textcolor{gray}{\textbf{+6.1}} 
 & \textcolor{blue}{\textbf{+2.8}} & \textcolor{blue}{\textbf{+3.1}} & \textcolor{blue}{\textbf{+3.2}} & \textcolor{blue}{\textbf{+1.7}} & \textcolor{gray}{\textbf{+3.1}}
 & \textcolor{blue}{\textbf{+4.1}} & \textcolor{blue}{\textbf{+3.3}} & \textcolor{blue}{\textbf{+2.0}} & \textcolor{blue}{\textbf{+2.6}} & \textcolor{gray}{\textbf{+2.3}}\\
\hline
\hline
\textbf{ASFormer}~\citep{yi2021asformer} & 57.9 & 54.7 & 45.4 & 52.3 & \textcolor{gray}{69.9} & 85.1 & 82.6 & 75.3 & 81.5 & \textcolor{gray}{82.3} & 9.2 & 7.6 & 5.2 & 9.2 & \textcolor{gray}{30.4}\\
+ CB~\cite{cui2019class} & \textcolor{blue}{+0.8} & \textcolor{blue}{+1.0} & \textcolor{blue}{+0.9} & \textcolor{blue}{+0.8} & \textcolor{gray}{-0.2} & \textcolor{blue}{+0.2} & \textcolor{blue}{+1.0} & \textcolor{blue}{+0.9} & \textcolor{blue}{+0.2} & \textcolor{gray}{+0.9} & 0.0 & \textcolor{red}{-0.1} & 0.0 & \textcolor{blue}{+0.2} & \textcolor{gray}{-2.2}\\
+ LA~\cite{menon2020long} & \textcolor{blue}{+1.4} & \textcolor{blue}{+1.0} & \textcolor{blue}{+1.3} & \textcolor{blue}{+0.5} & \textcolor{gray}{-0.2} & \textcolor{blue}{+0.2} & \textcolor{blue}{+0.9} & \textcolor{blue}{+1.5} & \textcolor{blue}{+0.4} & \textcolor{gray}{+0.9} & \textcolor{red}{-0.1} & \textcolor{blue}{+0.4} & \textcolor{blue}{+0.1} & \textcolor{blue}{+0.3} & \textcolor{gray}{-1.9}\\
+ Focal~\cite{lin2017focal} & \textcolor{blue}{+1.0} & \textcolor{blue}{+1.2} & \textcolor{blue}{+0.4} & \textcolor{red}{-0.3} & \textcolor{gray}{\textbf{+0.5}} & \textcolor{blue}{+0.7} & \textcolor{blue}{+0.8} & \textcolor{blue}{+1.1} & \textcolor{red}{-0.3} & \textcolor{gray}{+1.2} & \textcolor{blue}{+1.5} & \textcolor{blue}{+2.1} & \textcolor{blue}{+1.1} & \textcolor{blue}{+1.7} & \textcolor{gray}{-0.1}\\
+ $\tau$-norm~\cite{kang2019decoupling} & 0.0 & \textcolor{blue}{+0.2} & \textcolor{blue}{+0.4} & \textcolor{blue}{+0.8} & \textcolor{gray}{-0.8} & \textcolor{red}{-0.1} & 0.0 & \textcolor{blue}{+0.1} & \textcolor{blue}{+0.1} & \textcolor{gray}{-0.1} & \textcolor{red}{-1.9} & \textcolor{red}{-2.1} & \textcolor{red}{-1.3} & \textcolor{red}{-1.0} & \textcolor{gray}{-7.7}\\
+ ours(S-NCM) & \textcolor{blue}{\textbf{+3.1}} & \textcolor{blue}{\textbf{+3.2}} & \textcolor{blue}{\textbf{+3.6}} & \textcolor{blue}{\textbf{+2.8}} & \textcolor{gray}{\textbf{+0.5}}
 & \textcolor{blue}{\textbf{+1.5}} & \textcolor{blue}{\textbf{+2.2}} & \textcolor{blue}{\textbf{+3.1}} & \textcolor{blue}{\textbf{+1.6}} & \textcolor{gray}{\textbf{+1.7}}
 & \textcolor{blue}{\textbf{+4.3}} & \textcolor{blue}{\textbf{+4.5}} & \textcolor{blue}{\textbf{+3.5}} & \textcolor{blue}{\textbf{3.5}} & \textcolor{gray}{\textbf{+1.3}}\\
\hline
\end{tabular}}
\label{tab:all}
\end{table}

Compared to existing long-tail methods (Table \ref{tab:all}), 
our approach demonstrates superior per-class performance across all datasets and backbones. Existing methods such as CB~\citep{cui2019class} struggle with locating transitions; our method leverages constraints to detect transition boundaries and has substantial improvements in F1 scores. For example, we surpass the second best model LA~\cite{menon2020long} on F1 score by 8.3\%, 3.5\%, and 3.0\% for Breakfast, 50Salads, and Assembly101 respectively for MSTCN backbone. Additionally, our approach has strong frame-wise accuracy because it dynamically adjusts the learning focus based on action and transition learning states. Competing methods such as CB \citep{cui2019class} employ class-wise reweighting without considering the learning state. Focal loss \cite{lin2017focal} overemphasises frames at transition boundaries, even though these are ambiguous~\citep{ding2022temporal, liu2023diffusion}. Due to space constraints, we present the global performance of F1@25 score. Other global results can be found in Supplementary. The results demonstrate our method's ability to balance per-class and global performance. 

Table \ref{tab:group} compares head versus tail group performance. 
Our methods' emphasis on transitions allows us to improve segment-wise performance for both head and tail classes. From a frame-wise perspective, our approach boosts tail classes without compromising the head classes. Notably, Focal loss~\cite{lin2017focal} predominantly focuses on hard boundary frames from head classes due to their high frequency, thereby primarily improving head rather than tail classes.

\begin{table}[t]
\caption{Group-wise result summary across datasets and backbones.}
\centering
\resizebox{1.0\columnwidth}{!}{
\begin{tabular}{c|cc|cc|cc|cc|cc|cc}
\hline
\multirow{2}{*}{\centering{\textbf{Model}}} & \multicolumn{4}{c|}{\textbf{Breakfast}} & \multicolumn{4}{c|}{\textbf{50salads}} & \multicolumn{4}{c}{\textbf{Assembly101}} \\ 
\cline{2-13} & \multicolumn{2}{c|}{\textbf{Accuracy}} & \multicolumn{2}{c|}{\textbf{F1@25}} & \multicolumn{2}{c|}{\textbf{Accuracy}} & \multicolumn{2}{c|}{\textbf{F1@25}} & \multicolumn{2}{c|}{\textbf{Accuracy}} & \multicolumn{2}{c}{\textbf{F1@25}}\\
\cline{2-13} & Head & Tail & Head & Tail & Head & Tail & Head & Tail & Head & Tail & Head & Tail \\ 
\hline
\textbf{MSTCN} & 65.1 & 37.7 & 53.3 & 38.7 & 87.7 & 70.0 & 85.7 & 72.1 & 33.9 & 4.7 & 26.3 & 3.9 \\
+ CB~\cite{cui2019class} & 64.1 & 39.3 & 54.1 & 39.4 & \textbf{88.4} & 69.3 & 85.3 & 72.0 & 34.8 & 6.8 & 28.1 & 6.0 \\
+ LA~\cite{menon2020long} & 64.4 & 40.6 & 56.0 & 38.7 & 87.5 & 69.6 & 86.0 & 71.0 & 34.3 & 6.4 & 27.1 & 5.8 \\
+ Focal~\cite{lin2017focal} & \textbf{66.1} & 36.1 & 53.6 & 38.0 & 88.3 & 70.3 & 84.8 & 73.3 & \textbf{35.3} & 6.6 & 26.3 & 6.4 \\
+ $\tau$-norm~\cite{kang2019decoupling} & 65.3 & 36.2 & 52.7 & 37.4 & 87.6 & 70.3 & 85.1 & 71.6 & 34.0 & 4.3 & 25.9 & 4.2 \\
+ ours(S-NCM) & 65.3 & \textbf{44.0} & \textbf{64.5} & \textbf{44.6} & 87.8 & \textbf{72.5} & \textbf{87.7} & \textbf{75.7} & 34.1 & \textbf{8.7} & \textbf{31.7} & \textbf{7.6} 	\\
\hline
\hline
\textbf{ASFormer} & 69.7 & 39.8 & 69.9 & 43.9 & 90.6 & 77.4 & 87.5 & 80.3 & 35.2 & 5.7 & 29.0 & 4.8 \\
+ CB~\cite{cui2019class} & \textbf{70.2} & 40.8 & 71.2 & 44.7 & \textbf{90.9} & 78.0 & 88.4 & 81.4 & 35.4 & 5.9 & 26.5 & 5.2 \\
+ LA~\cite{menon2020long} & \textbf{70.2} & 40.4 & 71.2 & 44.7 & 90.3 & 78.1 & 89.0 & 81.1 & 36.1 & 5.9 & 27.5 & 5.7 \\
+ Focal~\cite{lin2017focal} & 69.9 & 39.1 & 71.3 & 44.9 & 89.7 & 77.3 & 88.1 & 81.8 & \textbf{36.2} & 7.9 & 29.4 & 7.9 \\
+ $\tau$-norm~\cite{kang2019decoupling} & 69.6 & 41.3 & 69.9 & 44.3 & 90.4 & 77.5 & 87.7 & 80.2 & 32.2 & 4.9 & 21.8 & 3.2 \\
+ ours(S-NCM) & 69.7 & \textbf{44.7} & \textbf{72.5} & \textbf{47.5} & \textbf{90.9} & \textbf{79.5} & \textbf{90.1} & \textbf{82.6} & 35.7 & \textbf{10.9} & \textbf{33.9} & \textbf{10.5} \\
\hline
\end{tabular}}
\label{tab:group}
\end{table}

We further evaluate our method with the SOTA DiffAct~\citep{liu2023diffusion} backbone on Breakfast. Results in Table \ref{tab:diff} demonstrate the effectiveness of our method on improving per class performance, particularly on tail actions, without sacrificing the global performance.

\begin{table}[ht]
\caption{Diffusion backbone trained on Breakfast. Left - per class \& global performance; Right - group-wise performance. }
\centering
\scalebox{0.8}{
 \begin{tabular}{c|ccc|c|c}
 \hline
 \multirow{2}{*}{\centering{\textbf{Model}}} & \multicolumn{4}{c|}{\textbf{Per class}} & \multirow{2}{*}{\textcolor{gray}{\textbf{G\_F1}}} \\
 \cline{2-5} & \multicolumn{3}{c|}{F1@\{10,25,50\}} & Acc. & \\ 
 \hline
 \textbf{DiffAct}~\citep{liu2023diffusion} & 63.3 & 61.5 & 53.6 & 56.2 & \textcolor{gray}{75.5} \\
 + CB~\cite{cui2019class} & \textcolor{red}{-0.4} & \textcolor{red}{-0.5} & \textcolor{red}{-0.4} & 0.0 & \textcolor{gray}{+0.3} \\
 + LA~\cite{menon2020long} & \textcolor{red}{-0.2} & \textcolor{red}{-0.4} & \textcolor{red}{-0.1} & \textcolor{blue}{+0.9} & \textcolor{gray}{+0.1} \\
 + ours(S-NCM) & \textcolor{blue}{\textbf{+0.9}} & \textcolor{blue}{\textbf{+0.5}} & \textcolor{blue}{\textbf{+0.4}} & \textcolor{blue}{\textbf{+1.8}} & \textcolor{gray}{\textbf{+0.4}} \\
 \hline
 \end{tabular}\quad %
 \begin{tabular}{cc|cc}
 \hline
 \multicolumn{2}{c|}{\textbf{Accuracy}} & \multicolumn{2}{c}{\textbf{F1@25}} \\
 \hline
 Head & Tail & Head & Tail \\ 
 \hline
 74.9 & 42.8 & 77.6 & 49.9 \\
 74.1 & 43.3 & 77.5 & 49.1 \\
 \textbf{75.4} & 44.5 & \textbf{78.1} & 49.0 \\
 74.5 & \textbf{46.4} & 77.6 & \textbf{50.8} \\
 \hline
 \end{tabular}
}
\label{tab:diff}
\end{table}
%-------------------------------------------------------
\subsection{Ablation Studies and Analysis}
\noindent \textbf{Components. } 
We evaluate the contributions of the objective function (for class-level bias) and transitional constraints (for transition-level bias) in our proposed constraint optimization Eq. (\ref{eq: op0}), alongside segment-wise post-processing S-NCM in Table \ref{tab:comp}. Incrementally incorporating the objective function and transitional constraints progressively enhances performance. When applied to the naive frame-wise NCM, despite significantly improving per-class accuracy, we observe a noticeable decline in the global F1 score due to over-segmentation. Our proposed S-NCM effectively tackles the over-segmentation issue, thus outperforming the naive NCM. Notably, even without the S-NCM component, our proposed cost-sensitive learning framework substantially enhances per-class performance while maintaining global performance. The inclusion of S-NCM further boosts the final results.

\begin{table}[htb]
\caption{Impact of components on AsFormer. First row is the baseline with no components.}
\centering
\resizebox{0.98\columnwidth}{!}{
\begin{tabular}{cccc|ccc|c|c|ccc|c|c}
\hline
\multirow{3}{*}{\textbf{Objective}} & \multirow{3}{*}{\textbf{Constraint}} & \multirow{3}{*}{\textbf{NCM}} & \multirow{3}{*}{\textbf{S-NCM}} & \multicolumn{5}{c|}{\textbf{Breakfast}} & \multicolumn{5}{c}{\textbf{Assembly101}} \\
\cline{5-14} & & & & \multicolumn{4}{c|}{\textbf{Per class}
} & \multirow{2}{*}{\textcolor{gray}{\textbf{G\_F1}}} & \multicolumn{4}{c|}{\textbf{Per class}} & \multirow{2}{*}{\textcolor{gray}{\textbf{G\_F1}}} \\ 
\cline{5-8} \cline{10-13} & & & & \multicolumn{3}{c|}{F1@\{10,25,50\}} & Acc. & & \multicolumn{3}{c|}{F1@\{10,25,50\}} & Acc. & \\ 
\hline
\xmark & \xmark & \xmark & \xmark & 57.9 & 54.7 & 45.4 & 52.3 & \textcolor{gray}{69.9} & 9.2 & 7.6 & 5.2 & 9.2 & \textcolor{gray}{30.2} \\
\hline
\cmark & \xmark & \xmark & \xmark & 58.4 & 55.6 & 46.4 & 52.9 & \textcolor{gray}{69.8} & 11.2 & 8.9 & 5.8 & 10.7 & \textcolor{gray}{29.8}\\
\cmark & \cmark & \xmark & \xmark & 60.6 & 57.3 & 48.5 & 54.4 & \textcolor{gray}{\textbf{70.8}} & 12.4 & 11.3 & 7.7 & \textbf{12.1} & \textcolor{gray}{29.9}\\
\hline
\cmark & \cmark & \cmark & \xmark & 59.7 & 56.4 & 46.7 & 55.0 & \textcolor{gray}{67.9} & 11.0 & 9.3 & 6.3 & 11.3 & \textcolor{gray}{23.1}\\
\cmark & \cmark & \xmark & \cmark & \textbf{61.0} & \textbf{57.9} & \textbf{49.0} & \textbf{55.1} & \textcolor{gray}{70.3} & \textbf{13.5} & \textbf{12.1} & \textbf{8.7} & \textbf{12.1} & \textcolor{gray}{\textbf{31.7}}\\
\hline
\end{tabular}}
\label{tab:comp}
\end{table}

\noindent \textbf{Impact of $\tau$. }
Table \ref{tab:tau} shows the impact of the hyperparameter $\tau$, applied to gain tensor $G$. This hyperparameter determines the learning balance between global and per-class performance. A smaller $\tau$ results in smoother reweighting factors, prioritising global performance but potentially obscuring the tail actions. Conversely, a large $\tau$ emphasises the learning of tail class, favouring per-class performance. A $\tau$ set too large may overemphasise tail actions, leading to performance drops. Compared to Breakfast, Assembly101 exhibits a larger scale and greater imbalance. The hyperparameter $\tau$ for Assembly101 is then set to a smaller value. More details regarding the selected $\tau$ can be found in Supplementary.
 
\begin{table}[htb]
\caption{Impact of threshold $\tau$ with Asformer.}
\centering
\resizebox{0.85\columnwidth}{!}{
\begin{tabular}{c|cc|ccc|cc|ccc}
\hline
\multirow{2}{*}{$\tau$} & \multicolumn{5}{c|}{\textbf{Breakfast}} & \multicolumn{5}{c}{\textbf{Assembly101}} \\ 
\cline{2-11} & \multicolumn{2}{c|}{\textbf{Per class}} & \multicolumn{3}{c|}{\textbf{Global}} & \multicolumn{2}{c|}{\textbf{Per class}} & \multicolumn{3}{c}{\textbf{Global}} \\ 
\cline{2-11} & F1@25 & Acc. & F1@25 & Acc. & Edit & F1@25 & Acc. & F1@25 & Acc. & Edit \\ 
\hline
0.1 & 56.6 & 53.4 & 69.9 & 71.8 & \textbf{74.0} & \textbf{12.1} & \textbf{12.7} & \textbf{31.7} & \textbf{40.8} & \textbf{32.9} \\
0.3 & \textbf{57.9} & \textbf{55.1} & \textbf{70.3} & \textbf{72.1} & 73.4 & 10.6 & 11.5 & 30.9 & 40.3 & 31.5 \\
0.5 & 57.4 & 54.9 & 69.9 & 71.8 & 72.4 & 10.8 & 11.5 & 30.1 & 39.0 & 31.1 \\
0.7 & 56.8 & 54.6 & 67.3 & 71.1 & 69.7 & 10.6 & 12.2 & 28.3 & 37.1 & 29.9 \\
\hline
\end{tabular}}
\label{tab:tau}
\end{table}

\begin{wrapfigure}{r}{0.45\textwidth}
 \vspace{-7mm}
 \begin{center}
 \includegraphics[width=0.45\textwidth]{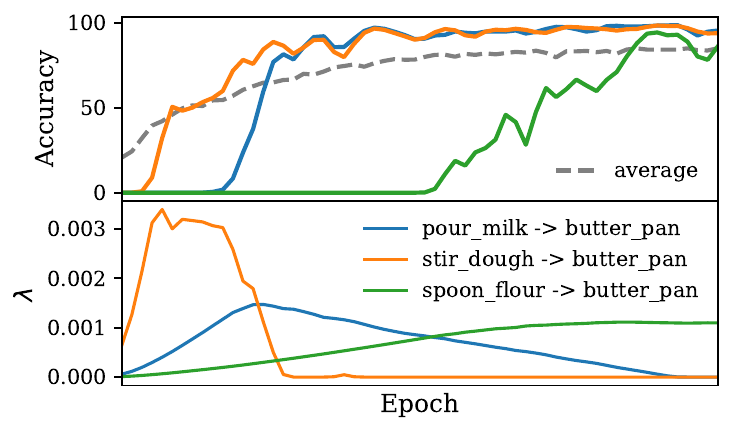}
 \end{center}
 \caption{ 
Transition accuracy and Lagrangian multiplier, $\lambda$, curves during training using AsFormer on Breakfast.
}
 \label{fig:lambda}
\end{wrapfigure}

\noindent \textbf{Lagrangian multiplier $\lambda$.}
We illustrate various types of evolution of accuracy and Lagrangian multipliers, $\lambda$, using transitions related to the action \emph{'butter\_pan'} in Fig.~\ref{fig:lambda}. Our constraints penalise transitions with learning speed slower than the average. An increasing multiplier indicates a violation of its corresponding constraint. In this plot, we observe that transitions to \emph{'butter\_pan'} from other actions exhibit varying learning states. For example, the transition from \emph{'stir\_dough'}, exhibits faster learning speed, with its accuracy surpassing the average accuracy, leading to its multiplier decreasing to zero. Conversely, for less frequent transitions, such as transition from \emph{'spoon\_flour'}, the corresponding $\lambda$ keeps increasing, indicating that its learning state remains below average, prompting more attention towards this transition.

\vspace{3mm}
\noindent \textbf{Computational cost. } Our method requires additional computation costs for calculating the confusion tensor. We estimate the confusion tensor at every epoch for the full training set, which leads to a 30\% longer training time. To mitigate this overhead, we could consider sampling a subset of the training set or employ an exponential moving average approach. 
The testing time complexity remains unaffected compared to the baseline.

\vspace{-1mm}
\section{Conclusion}
We propose a constrained optimization approach to address the bi-level learning bias in temporal action segmentation. The optimization includes an objective to reduce class-level bias arising from class imbalance, and extra transition constraints to reduce transition-level bias stemming from variations in transitions. The problem is transformed into a new cost-sensitive loss function with adaptively adjusted loss weights. Experiments on challenging benchmarks demonstrate the effectiveness of our approach.

\clearpage
\section*{Acknowledgements}
This research is supported by the National Research Foundation, Singapore under its NRF Fellowship for AI (NRF-NRFFAI1-2019-0001). Any opinions, findings and conclusions or recommendations expressed in this material are those of the author(s) and do not reflect the views of National Research Foundation, Singapore.

\bibliography{main}
\clearpage
\section*{Supplementary}
\label{sec:suppl}

\subsection*{A. Long-tail Problem in Temporal Action Segmentation}
Temporal action segmentation methods~\citep{farha2019ms, liu2023diffusion, yi2021asformer} often ignore the long-tail problem, leading to poor performance on tail classes. For instance, state-of-the-art models like MSTCN~\citep{farha2019ms},  ASFormer~\citep{yi2021asformer}, and DiffAct~\citep{liu2023diffusion} fail to predict tail classes accurately. On Breakfast dataset, MSTCN and ASFormer each have zero accuracy for 5 out of 48 classes, while DiffAct misses 4 classes entirely. On Assembly101 dataet, there are 30 classes do not appear in test set. Except those non-appeared classes, MSTCN and ASFormer achieve zero accuracy for 106 and 128 classes of 141 tail classes respectively. Details can be seen in Fig. \ref{fig:underfit_app}.

\begin{figure}[H]
\begin{tabular}{c}
\includegraphics[width=12.2cm]{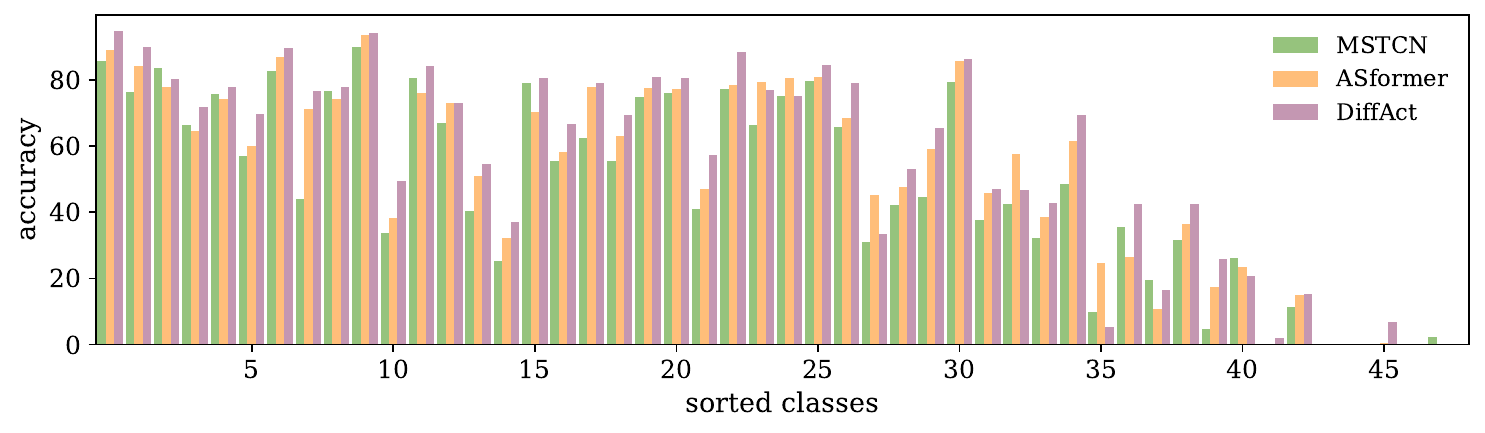}\\
(a) Breakfast \\
\includegraphics[width=12.5cm]{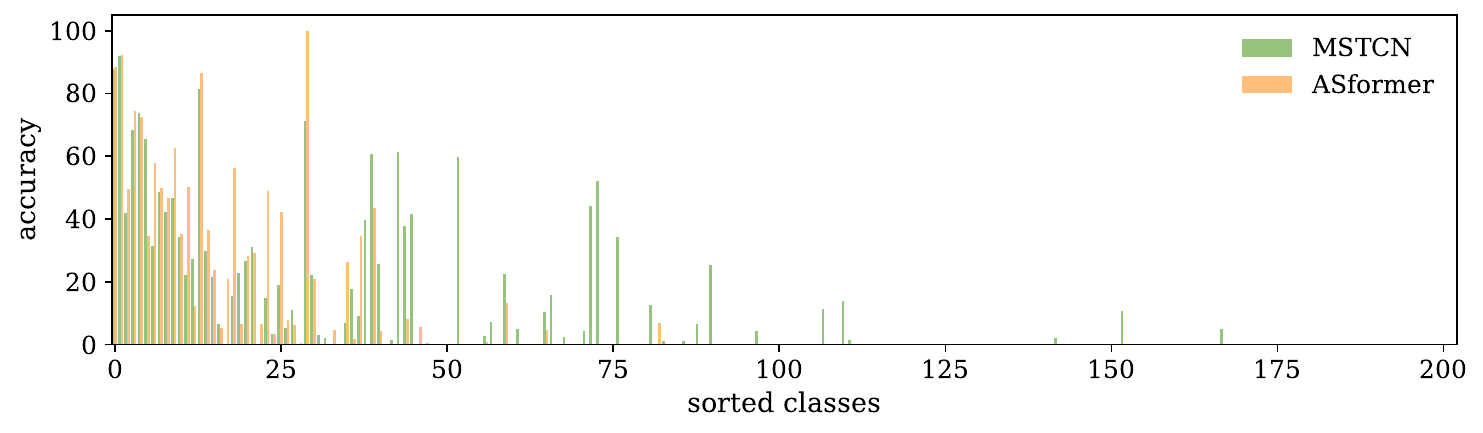} \\
(b) Assembly101
\end{tabular}
\vspace{-0.1cm}
\caption{Class-wise accuracy distribution. }
\label{fig:underfit_app}
\vspace{-0.2cm}
\end{figure}

\subsection*{B. Convert Optimization to Weighted Cross-entropy}
\begin{proposition}
Given a timestamp $t$ and its previous action $u_t$, the optimal classifier of $$\max\limits_{f} \sum_{i, j, k} G_{i,j,k} C_{i,j,k}[f]$$ for a gain matrix $G \in \mathbb{R}^{L \times L \times L+1}$ and the $t^{\text{th}}$ frame takes the form:
$$f^\ast(X, u_t) \in \arg\max\limits_{j \in [L]} \sum_{i} p_i( X) G_{i,j,u_{t}} $$
where $p_i(X)$ is the estimated conditional probability for class $i$ at the current frame $t$ by classifier $f$. 
\end{proposition}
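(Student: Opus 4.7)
The plan is to rewrite the objective in a form that admits pointwise optimization over each realization of $(X, u_t)$. First I would expand the confusion tensor via its definition $C_{i,j,k}[f] = \mathbb{E}_{(X,Y)}[\mathbb{1}(y_t{=}i, \hat{y}_t{=}j, u_t{=}k)]$ and swap the finite sum over $(i,j,k)$ with the outer expectation. Noting that $\hat{y}_t = f(X, u_t)$ is a deterministic function of the input and the previous-action context, I would condition on $(X, u_t)$ to obtain
\[
\sum_{i,j,k} G_{i,j,k} C_{i,j,k}[f] = \mathbb{E}_{X, u_t}\!\left[ \sum_{i, j, k} G_{i,j,k}\, p_i(X)\, \mathbb{1}\{f(X, u_t){=}j\}\,\mathbb{1}\{u_t{=}k\}\right],
\]
where $p_i(X) := P(y_t = i \mid X, u_t)$. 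Collapsing the indicators on $j$ and $k$ simplifies the right-hand side to $\mathbb{E}_{X, u_t}\!\left[\sum_i p_i(X)\, G_{i, f(X, u_t), u_t}\right]$.

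Second, I would invoke a standard pointwise-Bayes argument. Because the expectation is taken with respect to a nonnegative measure over $(X, u_t)$ and because the decision rule $f(X, u_t)$ at one input does not affect the integrand at any other input, the supremum over classifiers is attained by maximizing the integrand at each realization independently. Since the outer label $j$ ranges over the finite set $[L]$, this yields
\[
f^*(X, u_t) \in \arg\max_{j \in [L]} \sum_i p_i(X)\, G_{i, j, u_t},
\]
which is exactly the claimed expression. The use of $\in$ rather than equality accounts for possible ties, which can be broken arbitrarily.

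The main obstacle is the notational subtlety of what $p_i(X)$ depends on: the proposition writes $p_i(X)$, but the natural derivation produces the conditional $P(y_t{=}i \mid X, u_t)$. I would clarify this by treating $u_t$ as part of the conditioning context already absorbed into the classifier's inputs (equivalently, the features $X$ are assumed to be rich enough that the posterior over $y_t$ depends on $u_t$ only through $X$, or $u_t$ is explicitly supplied). Apart from this identification, the argument is routine: no measure-theoretic subtleties arise because the label and transition alphabets are finite, and no assumption on $G$ is required for the general statement. The subsequent reduction to the weighted cross-entropy of Eq.~(\ref{eq:rw}) follows only when $G_{:,:,k}$ is diagonal for every $k$, in which case $\sum_i p_i(X) G_{i,j,u_t} = G_{j,j,u_t}\, p_j(X)$ and the Bayes-optimal prediction coincides with the minimizer of the per-frame weighted log-loss by the calibration result of Patrini et al.~\citep{patrini2017making}.
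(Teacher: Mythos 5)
Your proposal is correct and follows essentially the same route as the paper's proof: expand the confusion tensor into an expectation, condition on $(X,u_t)$ via the tower property to introduce the posterior $p_i(X)$, and then maximize the integrand pointwise over the finite label set. Your explicit remark on the $p_i(X)$ versus $P(y_t{=}i\mid X,u_t)$ identification makes precise the same assumption the paper states informally (that the frame-wise posterior does not depend on $u_t$), so there is no substantive difference.
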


\begin{proof}
\begin{align*}
      \sum_{i,j, k} G_{i,j,k} C_{i, j, k}[f] &= \mathbb{E}_{(X, y_t, u_t)}[\sum_{i,j, k} G_{i,j,k} \mathbb{1}(y_t=i, \hat{y}_t=j, u_t=k)] \\
      &= \mathbb{E}_{(X, y_t, u_{t})}[\sum_{j} G_{y_t,j,u_{t}} \mathbb{1}(\hat{y}_t=j)] \\
      &= \mathbb{E}_{(X, u_{t})} \mathbb{E}_{(y_t \mid X, u_{t})} [\sum_{j} G_{y_t,j,u_{t}} \mathbb{1}(\hat{y}_t=j)] \\
      % &= \mathbb{E}_{(X, u_{t})} \mathbb{E}_{(y_t \mid X)} [\sum_{j} G_{y_t,j,u_{t}} \mathbb{1}(\hat{y}_t=j)] \\
      &= \mathbb{E}_{(X, u_{t})} [\sum_{i, j} p_i(X)  G_{i,j,u_{t}} \mathbb{1}(\hat{y}_t=j)]
\end{align*}

We use the fact in frame-wise classification where the prediction for $y_t$ does not depend on $u_{t}$. It suffices to maximize the above objective point-wise to compute the Bayes-optimal classifier. To predict for a frame labelled as $y_t$ of given input $X$ and the label for the previous action $u_{t}$, the prediction should maximize the term in the expectation.

$$f^\ast(X, u_t) \in \arg\max_{j \in [L]} \sum_{i} p_i(X)  G_{i,j,u_{t}}$$
where $G_{:,:,u_t }$ is a matrix, denoting a slice of $G$.
\end{proof}

In our case, the gain matrix $G_{:,:,u_t}$ is diagonal. The optimal classifier takes the form 
$$f^\ast(X, u_t) \in \arg\max\limits_{i \in [L]} p_i(X) G_{i,i, u_t} \propto \arg\min\limits_{i \in [L]} -G_{i,i, u_t} \log p_i(X) $$
which is the reweighted cross entropy loss and is calibrated for the diagonal gain matrix.

%------------------------------------------------------------------------------
\subsection*{C. Experimental Setting}
\textbf{Dataset distribution}. 
Extra data distribution of 50salads and Assembly101 is illustrated in Fig. \ref{fig:dist_suppl}.

\begin{figure}[H]
\begin{tabular}{c}
\includegraphics[width=5.7cm]{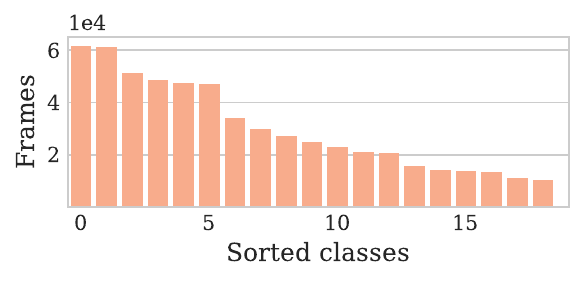}
\includegraphics[width=6.9cm]{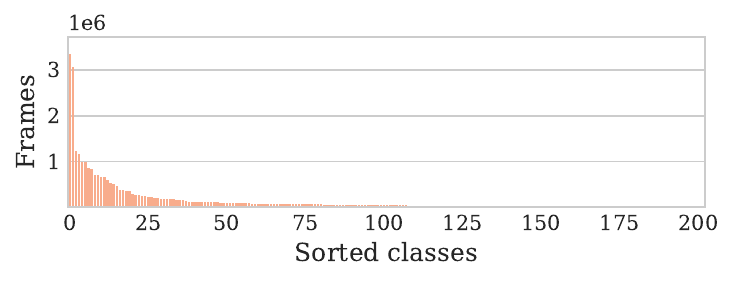} \\
(a) 50salads       \qquad \qquad  \qquad \qquad  \qquad \qquad   (b) Assembly101
\end{tabular}
\vspace{-0.2cm}
\caption{Data distribution of 50salads and Assembly101. }
\label{fig:dist_suppl}
\vspace{-0.2cm}
\end{figure}

\noindent \textbf{Hyperparameters}: %smoothing loss
The used hyperparameters for each dataset, method, and backbone are shown in Table \ref{tab:hyper_suppl}. We omit $\tau$-norm~\cite{kang2019decoupling} as the results always favour $\tau=1.0$ for $\tau$-norm. In our method, we fix the hyperparameter $\epsilon$ in Eq. (\ref{eq: op0}) as 0.9, and the learning rate for multiplier $\gamma$ in Algorithm~\ref{alg:1} as 0.01. 
\begin{table}[htb]
  \caption{Hyperparameters summary}
  \vspace{2mm}
  \label{tab:hyper_break_app}
  \centering
  \resizebox{0.8\columnwidth}{!}{
  \begin{tabular}{cccccc}
    \hline
    \multirow{2}{*}{\textbf{Data}} &  \multirow{2}{*}{\textbf{Model}} & \textbf{Focal}~\citep{lin2017focal} & \textbf{CB}~\citep{cui2019class} & \textbf{LA}~\citep{menon2020long}  &  \textbf{CSL}(ours) \\ 
     & & $\gamma$ & $\beta$ & $\tau$ & $\tau$ \\
    \hline
   \multirow{2}{*}{Breakfast} & MSTCN & 0.5 & 0.9 & 0.5 & 0.5 \\
                              & AsFormer & 1.5 & 0.9 & 0.1 & 0.3 \\
                              & DiffAct & - & 0.99 & 0.3 & 0.7 \\
    \hline
    \multirow{2}{*}{50salads} & MSTCN & 1.5 & 0.9 & 0.5 &  0.7 \\
                              & AsFormer & 0.5 & 0.99 & 0.3 & 0.9 \\
    \hline
    \multirow{2}{*}{Assembly} & MSTCN & 0.5 & 0.9 & 0.1 &  0.3 \\
                              & AsFormer & 0.5 & 0.9 & 0.3 &  0.1 \\
    \hline
  \end{tabular}
  }
\label{tab:hyper_suppl}
\end{table}

\subsection*{D. Additional Results}
\noindent  \textbf{Global performance. }
Evaluation in the main paper primarily focuses on per-class performance, as it better reflects the extent to which the long-tail problem is addressed. Since existing works in temporal action segmentation commonly report global performance, we also present detailed global results across different datasets, backbones, and methods in Table \ref{tab:glb_metric} for completeness. 
Notably, our method, which includes constraints for detecting transitions, demonstrates large improvements on global segment-wise metrics, \ie F1 and edit scores. Although our method may not always lead in frame-wise performance, it still delivers competitive results.
Balancing global and balanced results is challenging due to the trade-off: improving tail often boosts per-class results at the expense of head performance, resulting in the drop in global results. Our method achieves a good trade-off by significantly enhancing per-class performance while still showing competitive results on global metrics. 

\begin{table}[htb]
\caption{Result summary on global metrics.}
\vspace{0.2cm}
\centering
\resizebox{1.0\columnwidth}{!}{
\begin{tabular}{c|ccc|c|c|ccc|c|c|ccc|c|c}
\hline
\multirow{2}{*}{\centering{\textbf{Model}}} & \multicolumn{5}{c|}{\textbf{Breakfast}} & \multicolumn{5}{c|}{\textbf{50salads}} & \multicolumn{5}{c}{\textbf{Assembly101}}  \\ 
\cline{2-16} & \multicolumn{3}{c|}{F1@\{10,25,50\}} & Edit & Acc. & \multicolumn{3}{c|}{F1@\{10,25,50\}} & Edit & Acc. & \multicolumn{3}{c|}{F1@\{10,25,50\}} & Edit & Acc.    \\ 
\hline
\textbf{MSTCN}                            & 63.2 & 57.9 & 46.0 & 66.6 & 67.7   & 78.5 & 75.9 & 67.0 & 71.4 & 81.1 &   30.8 & 27.2 & 20.5 & 30.1 & \textbf{39.8}  \\
+ CB~\cite{cui2019class}                  & 63.6 & 57.9 & 45.7 & 66.8 & 67.4   & 77.7 & 75.5 & 65.8 & 71.1 & 81.0 &   30.0 & 26.7 & 20.2 & 28.4 & 39.7  \\
+ LA~\cite{menon2020long}                 & 63.1 & 57.9 & 45.6 & 67.2 & 67.6   & 78.2 & 75.2 & 66.9 & 70.4 & 80.8 &   29.4 & 26.1 & 20.0 & 29.2 & 39.2  \\
+ Focal~\cite{lin2017focal}               & 63.1 & 57.5 & 45.5 & 67.3 & \textbf{68.5}   & 78.8 & 76.1 & 67.6 & 70.8 & 81.7 &   30.6 & 27.0 & 20.0 & 30.7 & 39.2  \\
+ $\tau$-norm~\cite{kang2019decoupling}   & 62.4 & 57.0 & 45.1 & 66.3 & 67.9   & 77.7 & 75.3 & 66.5 & 70.8 & 81.1 &   31.1 & 27.4 & 20.7 & 30.5 & 39.6  \\
+ ours(S-NCM)                             & \textbf{69.3} & \textbf{64.0} & \textbf{50.9} & \textbf{67.7} & 67.5   & 
                                            \textbf{81.3} & \textbf{79.0} & \textbf{70.2} & \textbf{74.0} & \textbf{81.8} &   
                                            \textbf{32.9} & \textbf{29.5} & \textbf{22.8} & \textbf{30.8} & 39.1  \\
\hline
\hline
\textbf{ASFormer}                         & 75.5 & 69.9 & 56.1 & 74.5 & \textbf{72.4}   & 84.8 & 82.3 & 75.1 & 79.0 & 85.2 &   34.4 & 30.4 & 21.5 & 31.8 & 41.1 \\
+ CB~\cite{cui2019class}                  & 75.6 & 69.7 & 55.8 & 74.9 & 71.9   & 84.9 & 83.2 & 75.7 & 78.7 & 85.8 &   32.6 & 28.2 & 20.1 & 30.6 & 41.0 \\
+ LA~\cite{menon2020long}                 & 75.6 & 69.7 & 56.3 & 74.9 & \textbf{72.4}   & 84.9 & 83.2 & 76.3 & 78.3 & 85.3 &   32.3 & 28.5 & 20.9 & 30.2 & \textbf{41.3} \\
+ Focal~\cite{lin2017focal}               & \textbf{75.7} & \textbf{70.4} & 56.2 & \textbf{75.2} & 72.3   & 85.7 & 83.5 & 75.7 & 79.6 & 84.6 &   34.1 & 30.3 & 22.4 & 32.1 & 41.2 \\
+ $\tau$-norm~\cite{kang2019decoupling}   & 74.9 & 69.1 & 55.7 & 73.6 & 72.2   & 84.7 & 82.2 & 75.2 & 78.9 & 85.2 &   26.4 & 22.7 & 15.9 & 24.3 & 38.5 \\
+ ours(S-NCM)                             & 75.3 & \textbf{70.4} & \textbf{57.5} & 74.3 & 72.1   & \textbf{86.0} & \textbf{84.0} & \textbf{77.8} & \textbf{80.3} & \textbf{86.0} & 
                                            \textbf{34.8} & \textbf{31.7} & \textbf{23.8} & \textbf{32.9} & 40.8 \\
\hline
\end{tabular}}
\label{tab:glb_metric}
\end{table}

\noindent \textbf{Transition detection. } 
The transition constraints help focus on learning hard transitions. Fig. \ref{fig:trans_acc} presents the distribution of transition accuracy, as defined in Eq. (\ref{eq:acc}) on Breakfast test set. Transitions are sorted according to the baseline performance. The results indicate that the model trained under the defined the constraints can detect more transitions, particularly in the tail section where the baseline model struggles to recognise them, demonstrating the efficacy of our transition constraints. Specifically, the baseline Asformer can detect 132 out of 167 transitions, while our cost-sensitive learning(CSL) method successfully detects 11 more transitions. Besides, our method achieves higher average transition accuracy 56.1\% than the baseline 54.3\%.

\begin{figure}[H]
\includegraphics[width=13cm]{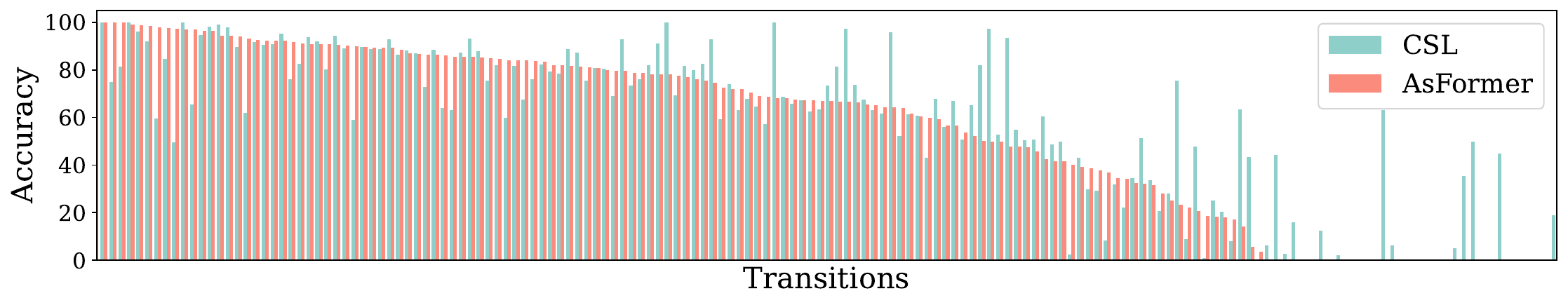}
\vspace{-0.6cm}
\caption{Transition accuracy for AsFormer on Breakfast testset. }
\label{fig:trans_acc}
\vspace{-0.4cm}
\end{figure}

\end{document}